\documentclass[10pt,twocolumn,letterpaper]{article}

\usepackage[pagenumbers]{wacv} 

\usepackage{amsthm,amsmath,amssymb}
\newtheorem{theorem}{Theorem}

\usepackage{booktabs}      
\usepackage{multirow}
\usepackage{array}  
\usepackage{float}
\usepackage{graphicx}
\usepackage{tabularx}      
\usepackage{caption}
\usepackage{subcaption}

\definecolor{wacvblue}{rgb}{0.21,0.49,0.74}
\usepackage[pagebackref,breaklinks,colorlinks,allcolors=wacvblue]{hyperref}

\def\wacvPaperID{1343} 
\def\confName{WACV}
\def\confYear{2027}

\title{DMDSC: A Dynamic-Margin Deep Simplex Classifier for Open-Set Recognition on Medical Image Datasets}

\author{
Vishal,
Arnav Aditya,
Nitin Kumar, and
Saurabh Shigwan\\
\textit{Shiv Nadar Institution of Eminence, Delhi NCR, India}\\
{\tt\small \{vi921,aa716, nitin.kumar, saurabh.shigwan\}@snu.edu.in}
}

\begin{document}
\maketitle
\begin{abstract}
Medical imaging datasets are often characterized by extreme class imbalances, where rare pathologies are significantly underrepresented compared to common conditions. This imbalance poses a dual challenge for Open-Set Recognition (OSR): models must maintain high classification accuracy on known classes while reliably rejecting unknown samples unseen during training in the clinical settings.
While recently proposed Deep Simplex Classifier (DSC)~\cite{cevikalp2024reaching} and UnCertainty-aware Deep Simplex Classifier (UCDSC)~\cite{Aditya_2026_WACV} successfully leverage Neural Collapse to ensure maximal inter-class separation, they rely on a uniform margin that does not account for the varying densities of medical classes. 
In this paper, we propose DMDSC, an enhanced framework featuring a dynamic margin approach. Our approach automatically adapts class-specific margins based on label frequency, enforcing a higher penalty and tighter feature clustering for rare pathologies to counteract the effects of data imbalance. Apart from this, we asymptotically show the upper bounds on the margins as a function of the number of classes.  Extensive experiments conducted on diverse medical benchmarks on BloodMNIST\cite{medmnistv2}, OCTMNIST\cite{medmnistv2}, DermaMNIST\cite{medmnistv2}, BreaKHis$(40\times)$~\cite{spanhol2015dataset} and ASC\cite{naqvi2023augmented} datasets, demonstrate that our framework outperforms state-of-the-art methods. The source code can be accessed at https://anonymous.4open.science/r/DMDSC/
\end{abstract}

\section{Introduction}
\label{sec:intro}

Open-set Recognition (OSR) addresses the fundamental requirement for classifiers to operate safely in real-world clinical environments by detecting samples from unknown classes that were not seen during training \cite{geng2020recent, scheirer2012toward}. In medical imaging, this capability is essential, as a model must correctly classify known pathologies while rejecting imaging artifacts or novel diseases to prevent overconfident misdiagnoses and ensure trustworthy AI-assisted decision-making \cite{bendale2015towards}. Recent studies have utilized the phenomenon of Neural Collapse (NC)\cite{papyan2020prevalence}, in which feature representations geometrically converge to the vertices of a simplex equiangular tight frame (ETF), forming a structured feature space that increases inter-class separation \cite{ cevikalp2024reaching}.

Despite the theoretical elegance of NC-based frameworks, significant challenges persist in the medical domain, most notably the presence of class imbalance \cite{japkowicz2002class}. Many medical datasets \cite{medmnistv2} typically exhibit the phenomenon of class imbalance with some diseases having ample data and others having minimal sample coverage.  
In imbalanced scenarios, uniform-margin constraints incorporated by DSC \cite{cevikalp2024reaching} and UCDSC \cite{Aditya_2026_WACV} are suboptimal, where majority classes can dominate the embedding space \cite{japkowicz2002class}, encroaching on the representations of rare pathologies and increasing the open space risk \cite{scheirer2012toward} around minority class centers.

To address this, we propose \textbf{DMDSC}, a framework that introduces the concept of \textbf{Dynamic Margin (DM)}. Unlike existing models that apply uniform geometric constraints \cite{chen2020learning}, our method automatically adapts class-specific margins based on label frequency during training. By enforcing larger margins for underrepresented pathologies, DMDSC penalizes the open space more aggressively around rare class centers, ensuring that even minority classes maintain discriminative feature clusters that are resilient to encroachment by majority classes.

We verify this approach through a comprehensive empirical analysis across a set of 5 medical datasets. We observe that DMDSC outperforms other state-of-the-art methods. 
Furthermore, we provide a comparative study against existing SOTA methods across three primary metrics: AUROC for unknown class detection, OSCR for joint recognition and classification, and Accuracy (ACC) for closed-set performance. Our results show that DMDSC consistently achieves superior or at-par performance across these metrics, particularly in the presence of class imbalance.
An overview of the proposed method is shown in Figure~\ref{fig:simplex}
\begin{figure*}[t]
    \centering
    \includegraphics[width=1.0\linewidth]{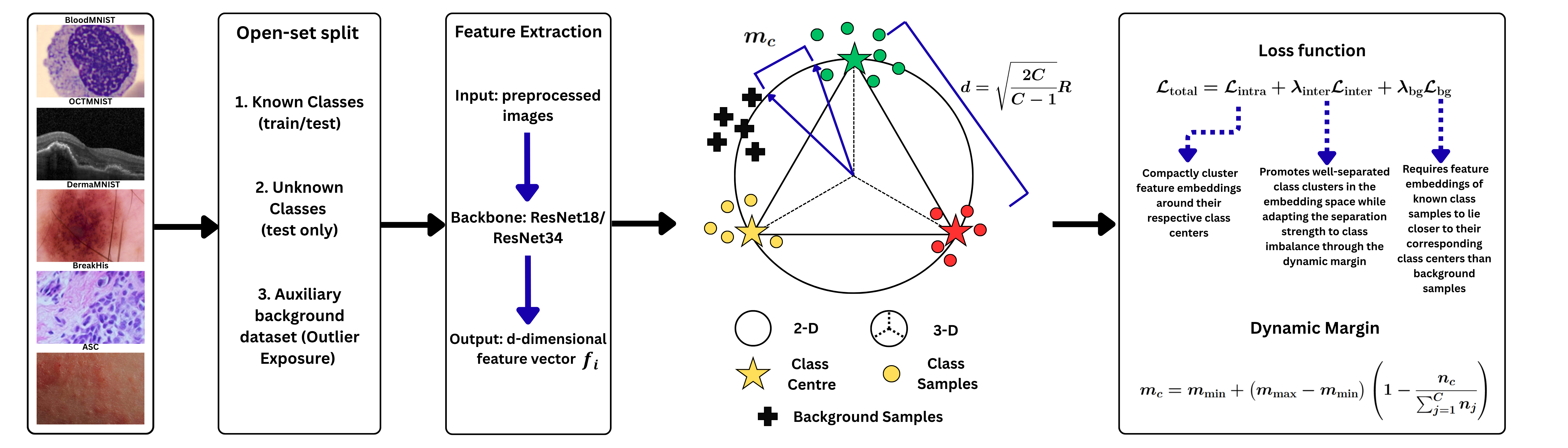}
        \caption{Illustration of the proposed open-set recognition framework with dynamic margin learning. In our proposed framework, input images are mapped to a hyperspherical embedding space where class centers (stars) are fixed at the vertices of a simplex ETF. The distance between any two class centers is d, ensuring maximal inter-class separation. Known-class samples (dots) are pulled toward their corresponding centers, while background/outlier samples (crosses) are pushed away. A class-adaptive dynamic margin enforces both compact intra-class clustering and strong inter-class/outlier separation. The total loss jointly promotes discriminative features and robust unknown-class rejection.}
    \label{fig:simplex}
\end{figure*}

The main contributions of this paper are summarized as follows:
\begin{itemize}[label=\textbullet]
    \item We introduce the concept of \textbf{Dynamic Margin} that scales margins inversely with label frequency to counteract the distorting effects of medical data imbalance.
    \item We propose \textbf{DMDSC}, an imbalance-robust OSR framework that maintains stable geometric separation for both majority and minority pathologies within a simplex ETF space.
    \item We demonstrate through extensive experimentation on clinical benchmarks that our dynamic approach provides superior rejection capabilities compared to previous uniform-margin UCDSC and other state-of-the-art (SOTA) methods.
    \item We demonstrate that \textbf{DMDSC} exhibits remarkable resilience to skewed data distributions. Empirical validation through controlled class-attrition studies reveals that the model maintains high performance stability without significant degradation, even under severe imbalance ratios.
\end{itemize}

\subsection*{Theoretical Foundations}
Scheirer et al. \cite{scheirer2012toward} first formalized OSR by introducing the concept of \textit{open space risk}, defining a classifier $f(x) > 0$ to indicate a known-class decision and minimizing an objective that includes both empirical risk and the risk over open space:
\begin{equation}
    R_{\text{open}}(f) = \lambda R_\varepsilon(f) + R_O(f),
\end{equation}
where $R_\varepsilon$ is the empirical classification loss and $R_O$ measures how much of the open space is incorrectly labeled as known. This formulation encourages classifiers to shrink decision regions around training data to avoid incorrect labeling of unknowns.
This foundational work led to the development of Compact Abating Probability (CAP) models \cite{scheirer2014probability} and the Extreme Value Machine (EVM) \cite{rudd2018extreme}. These methods utilize Extreme Value Theory (EVT) to model the "tails" of distance distributions, allowing for a statistical basis for novelty detection. While EVT-based methods provide strong mathematical guarantees, they often struggle with the high-dimensional manifolds generated by modern deep neural networks.

Our approach aligns with the principles of statistical learning by shrinking decision regions around training data to minimize open space risk. By making class-aware simplex equiangular tight frame (ETF) framework, we bridge the gap between geometric feature alignment and the practical requirements of long-tailed clinical data distributions.

This theoretical framework penalizes classifiers for labeling regions far from the training data as known classes, effectively defining the "unknown" as the vast, unoccupied regions of the feature space.  

\section{Related Work}
\label{sec:relatedwork}

\subsection{Neural Collapse and Geometric Constraints}
Another foundational concept is \textbf{Neural Collapse (NC)} \cite{papyan2020prevalence} describes a geometric convergence during the terminal phase of training where last-layer features collapse to their class means, and these means align with the vertices of a Simplex Equiangular Tight Frame (ETF). This property provides the structural backbone for the Deep Simplex Classifier (DSC) approach \cite{cevikalp2024reaching}. Subsequently, UCDSC \cite{Aditya_2026_WACV} was built upon this by integrating uncertainty-aware regularization to penalize the open space between class centres. While UCDSC effectively utilized NC to bound the known classes, it inherited a notable drawback: the assumption of a \textit{uniform margin} across all classes. In real-world data distributions, particularly those with high semantic overlap or class imbalance, a uniform margin often fails to protect the boundaries of minority classes, an architectural constraint we challenge in this work.

\subsection{Reconstruction and Generative Models}
Reconstruction-based models like CROSR \cite{yoshihashi2019classification} and C2AE \cite{oza2019c2ae} utilize decoders to measure novelty via reconstruction error. Similarly, generative models such as G-OpenMax \cite{ge2017generative}, OSRCI \cite{neal2018open}, and DIAS \cite{moon2022difficulty} synthesize "pseudo-unknowns" to train the classifier.
Unlike these methods, which often introduce significant computational overhead or risk bias from synthetic data quality, our method is purely discriminative. Instead of using generative mechanisms, we enforce structure through the geometric arrangement of features and dynamic margins.

\subsection{Prototype and Reciprocal Point Learning}
Prototype-based methods like GCPL \cite{yang2018robust} and Reciprocal Point Learning (RPL) \cite{chen2020learning} attempt to structure the embedding space by clustering features around centers or "reciprocal points."
A significant weakness of RPL \cite{chen2020learning} and its adversarial extension ARPL \cite{chen2021adversarial} is their reliance on \textit{uniform-margin constraints}. These methods do not account for label frequency, making them sensitive to initialization in imbalanced medical datasets. In contrast, our proposed DMDSC introduces a class-aware dynamic margin that scales inversely with sample frequency, specifically protecting the boundaries of rare pathologies.

\subsection{OSR in Medical Imaging}
In clinical settings, OSR is uniquely challenging due to the "Near-OOD" (Out-of-Distribution) problem, where unknown pathologies share significant semantic or structural overlap with known conditions, making them indistinguishable via simple distance metrics \cite{yang2024generalized}. 

Methods like Open-Margin Cosine Loss (OMCL) \cite{liu2023learning} utilize Margin Loss with Adaptive Scale (MLAS) and Open Space Suppression (OSS) to penalize sparse embedding regions. More recently, SCI-OSR \cite{10634167} introduced a semi-supervised framework using selective mixup strategies to handle imbalanced medical datasets. However, these approaches either rely on the heuristic that unknowns reside solely in sparse areas or based on pseudo unseen class samples. Such methods can struggle to maintain sharp geometric separation in "Near-OOD" tasks where unknown pathologies are proximal to known clusters. In contrast, our method avoids synthetic data and generative overhead. By integrating a class-aware dynamic margin directly into the rigid Neural Collapse objective, we ensure robust detection of rare conditions.
\section{Proposed Method}
Given the labeled training dataset $\mathcal{D} = \{(x_i, y_i)\}_{i=1}^{n}$  consisting of $n$ samples from $C$ known classes $\phi_C=\{1,\dots,C\}$, where $x_i \in \mathcal{X}$ represents the $i$-th input sample and $y_i \in \phi_C$ is the corresponding class label. In addition, let $\mathcal{D}_{bg} = \{x_k^{bg}\}_{k=1}^{K}$ denote an auxiliary dataset containing samples that do not belong to any of the known classes. We consider a deep neural network parameterized by $\theta$ that implements a feature embedding 
$f_{\theta}:\mathcal{X}\to\mathbb{R}^d$, which maps each input $x$ to a $d$-dimensional latent vector 
$f_{\theta}(x)$. The embedding dimension satisfies $d \ge C-1$ \cite{cevikalp2024reaching}. Our proposed method learns the network parameters $\theta$ using $\mathcal{D}$ and $\mathcal{D}_{bg}$ such that for a test sample $x$ the classification rule is given by:
\begin{equation}
\hat{y}(x) \;=\;
\begin{cases}
\hat{c}(x), & \text{if } \min_{c}\|f_{\theta}(x)-s_c\|_2 \le \tau,\\
C+1, & \text{otherwise}.
\end{cases}
\label{eq:open_set_rule}
\end{equation}
where
\begin{equation}
\hat{c}(x) \;=\; \arg\min_{c\in\{1,\dots,C\}}\bigl\| f_{\theta}(x)-s_c\bigr\|_{2},
\label{eq:closed_set_pred}
\end{equation}

This rule correctly classifies known samples and rejects unknowns. The class prototypes $\{s_c\}_{c=1}^{C}\subset\mathbb{R}^{d}$ are fixed at the vertices of a simplex equiangular tight frame (ETF) inscribed in a hypersphere of radius $R$~\cite{cevikalp2024reaching,papyan2020prevalence}, with embedding dimension $d\ge C-1$. We follow standard practice and report threshold-independent metrics (AUROC, OSCR) in addition to closed-set accuracy, so that $\tau$ does not bias the comparison. We get the threshold values of $\tau$ in terms of radius $R$ through trial and error.

Let the deep neural network feature representations of the training samples be denoted by
$\{(f_i, y_i)\}_{i=1}^{n}$, where $f_i \in \mathbb{R}^d$ is the learned feature
vector and $y_i \in \phi_C$ is the corresponding class label. Under this setting, the overall training objective of the proposed classifier can be written as:
\begin{equation}
\mathcal{L}_{total}
= \mathcal{L}_{intra} 
+ \lambda_{inter} \mathcal{L}_{inter}
+ \lambda_{bg} \mathcal{L}_{bg}
\end{equation}
The weight parameters $\lambda_{\text{inter}}$ and $\lambda_{\text{bg}}$ must be set by the user. We obtained the best results for $\lambda_{\text{inter}}$ and $\lambda_{\text{bg}}$ between 0 and 10.

We define intra-class loss $\mathcal{L}_{intra}$, dynamic-margin based loss $\mathcal{L}_{inter}$ and class-inclusion loss $\mathcal{L}_{bg}$ \cite{chin2018robust} as follows:
\begin{equation}
\mathcal{L}_{intra} = \frac{1}{n} \sum_{i=1}^{n} \left\| f_i - s_{y_i} \right\|_2^2,
\end{equation}
where $s_{y_i}$ denotes the predefined class prototype corresponding to
label $y_i$, constructed as a simplex vertex. This loss term encourages the learned feature embeddings to be compactly clustered around their respective class centers during training.

To enforce separation among the known classes, we introduce an inter-class
triplet-style loss that pushes each feature embedding away from all rival class centers. For a feature vector $f_i$ belonging to class $y_i$, this loss
encourages the distance to the correct class center $s_{y_i}$ to be smaller than the
distance to any rival class center $s_c$, $c \neq y_i$, by at least a class-adaptive
margin $m_i$. Formally, the inter-class loss is defined as:
{\small
\begin{equation}
\mathcal{L}_{inter}
=
\sum_{i=1}^{n}
\sum_{\substack{c=1 \\ c \neq y_i}}^{C}
\max\!\Big(
0,\;
m_{i}^2
+ \| f_i - s_{y_i} \|_2^2
- \| f_i - s_c \|_2^2
\Big),
\label{eq:inter_triplet}
\end{equation}
}
which promotes well-separated class clusters in the embedding space while adapting
the separation strength to class imbalance through the dynamic margin $m_{i}$.

 Recent studies \cite{cevikalp2023anomaly, dhamija2018reducing, geng2020recent, miller2021class} show that incorporating an auxiliary (background) dataset containing samples from class outside the known target classes can substantially improve open-set recognition performance. Motivated by these findings, we use "300k Random Images" \cite{torralba200880, hendrycks2019oe} as auxiliary data, a small subset of the publicly available Tiny ImageNet dataset \cite{torralba200880}, and integrate it into our proposed loss function to enforce separation between unknown regions and known-class centers. Let the deep neural network features $f^{\mathrm{bg}}_k \in \mathbb{R}^d$, $k=1,\dots,K$, be the features of the auxiliary samples ($K$ is the number of auxiliary samples). To incorporate these auxiliary samples during training, we introduce an additional loss term $\mathcal{L}_{bg}$ to push background features away from the known-class centers:
{\small
\begin{equation}
\mathcal{L}_{bg}
=
\sum_{i=1}^{n}
\sum_{k=1}^{K}
\max\!\Big(
0,\;
m_{i}^2
+ \|f_i - s_{y_i}\|_2^2
- \|f^{\mathrm{bg}}_k - s_{y_i}\|_2^2
\Big),
\label{eq:bg_loss}
\end{equation}
}
where $m_{i}$ denotes the class-adaptive dynamic margin parameter associated with class $i$. This loss term includes a margin constraint, requiring that feature embeddings of known class samples to lie closer to their corresponding class centers than background samples by a minimum margin $m_{i}$.

\subsection{Class Adaptive Dynamic Margin}
Existing open-set recognition methods such as DSC \cite{cevikalp2024reaching} and UCDSC \cite{Aditya_2026_WACV} use a uniform margin for all classes to push background samples away from the known class clusters. Although this uniform margin works well for balanced datasets, it leads to a performance degradation on imbalanced datasets, where majority classes form dense clusters and minority classes form sparse clusters. Using a uniform margin for all classes leads to two key issues; $i)$ the margin may be too small for minority classes, which needs more separation to avoid being confused with unknown samples. $ii)$ the margin may be unnecessarily too large for majority classes, which causes unstable training. To address these limitations, we propose a class-adaptive dynamic margin $m_c$ for any class $c$ as:  
\begin{equation}
m_{c}
=
m_{\min}
+
\bigl(m_{\max} - m_{\min}\bigr)
\left(
1 - \frac{n_c}{\sum_{j=1}^{C} n_j}
\right),
\label{eq:dynamic_margin}
\end{equation}
where $n_c$ denotes the number of training samples belonging to class $c \in C$, and
$C$ is the total number of known classes. The minimum margin $m_{\min}$ and maximum margin $m_{\max}$ are hyperparameters constrained as:
\begin{equation}
0 < m_{\min} < m_{\max} < \frac{R}{\sqrt{2}}
\label{eq:margin_constraint}
\end{equation}
where $R$ denotes the radius of the hypersphere on which the class centers (simplex ETF vertices) are embedded. This constraint ensures
geometrically valid margins while preserving inter-class separation.

It is worth noting that although our proposed dynamic margin concept looks similar to DMCL \cite{lin2025dynamic}, unlike DMCL \cite{lin2025dynamic} which uses pairwise margin $m_{ij}$ for any pair of classes $i$ and $j$, our proposed dynamic margin $m_c$ is a class specific margin which adjusts according to the number of samples in each class in DSC framework. This formulation assigns larger margins to minority classes and smaller margins to majority classes, thereby accounting for class imbalance in the feature space.
\begin{theorem}
\label{thm:class-adaptive-margin-rule}
Let $n_c>0$ denote the number of training samples in class $c$, and define
$N=\sum_{j=1}^{C} n_j,
 and~
p_c=\frac{n_c}{N}.$

\noindent
Consider the margin function
\begin{equation}
m_c \;=\; m_{\min} + (m_{\max}-m_{\min})(1-p_c),
\label{eq:margin-rule}
\end{equation}
where $0<m_{\min}<m_{\max}$ are fixed constants. Then:
\begin{enumerate}[label=(\alph*)]
\item (\emph{Boundedness}) For all $c$, $m_c\in [m_{\min},\, m_{\max})$.
\item (\emph{Monotonicity}) If $n_a\ge n_b$, then $m_a\le m_b$.
\item (\emph{Extremes}) $\displaystyle \lim_{p_c\to 1} m_c = m_{\min}$ and $\displaystyle \lim_{p_c\to 0} m_c = m_{\max}$.
\item (\emph{Uniqueness}) Among all affine functions $m(p)=\alpha+\beta p$,
      \eqref{eq:margin-rule} is the unique one satisfying
\begin{equation}
m(1)=m_{\min}, \quad \text{and} \quad m(0)=m_{\max}.
\end{equation}
\end{enumerate}
\end{theorem}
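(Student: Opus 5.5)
The plan is to reduce all four parts to elementary properties of the affine map $g(p)=m_{\min}+(m_{\max}-m_{\min})(1-p)=m_{\max}-(m_{\max}-m_{\min})p$. The structural facts we need are $\Delta:=m_{\max}-m_{\min}>0$ and $p_c\in(0,1]$. The second follows because $n_c>0$ and $n_c\le N$. The whole proof rests on the observation that $g$ is continuous and strictly decreasing with slope $-\Delta<0$.

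For (a), substitute $p_c\in(0,1]$ into $m_c=m_{\max}-\Delta p_c$. Since $\Delta p_c\le\Delta$, we get $m_c\ge m_{\min}$, with equality exactly when $p_c=1$ (i.e.\ $C=1$). Since $\Delta p_c>0$, we get $m_c<m_{\max}$. Hence $m_c\in[m_{\min},m_{\max})$.

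For (b), note that $N$ is common to all classes, so $n_a\ge n_b$ gives $p_a\ge p_b$. Then $m_a-m_b=-\Delta(p_a-p_b)\le 0$. For (c), continuity of the affine $g$ gives $\lim_{p\to1}g(p)=g(1)=m_{\min}$ and $\lim_{p\to0}g(p)=g(0)=m_{\max}$. These limits are understood as limits of the function $g$ in the variable $p$, approached through admissible values of $p_c$, for example by letting $n_c/N$ vary with the class counts.

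For (d), suppose $m(p)=\alpha+\beta p$ satisfies both conditions. Then $m(0)=\alpha=m_{\max}$ and $m(1)=\alpha+\beta=m_{\min}$, so $\beta=m_{\min}-m_{\max}$. This linear system has a unique solution, and the resulting function equals $g$ after expanding \eqref{eq:margin-rule}. Conversely, $g$ satisfies both conditions, which gives existence.

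No step is a real obstacle. The only point needing care is the interpretation of (c): $p_c$ ranges over a discrete set of attainable ratios, so the limits must be stated for the function $g$ on $(0,1]$ rather than for a fixed dataset. In (a), one should also record that the lower endpoint is attained only in the degenerate single-class case.
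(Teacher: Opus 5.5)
Your proposal is correct and follows essentially the same route as the paper: for (a) you bound $\Delta p_c\in(0,\Delta]$ (equivalently $1-p_c\in[0,1)$), for (b) you use $m_a-m_b=-\Delta(p_a-p_b)\le 0$, for (c) you evaluate the affine map directly, and for (d) you solve $\alpha=m_{\max}$, $\alpha+\beta=m_{\min}$. Your additional remarks are correct refinements that the paper omits: the lower endpoint is attained only when $C=1$, the limits in (c) should be read for the function on $(0,1]$, and (d) needs an existence check.
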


\begin{proof}
Since $0<p_c\le 1$, we have $0\le 1-p_c<1$. Therefore,
\begin{align}
m_c
&= m_{\min} + (m_{\max}-m_{\min})(1-p_c) \nonumber\\
&\in \big[m_{\min},\, m_{\min}+(m_{\max}-m_{\min})\big) \nonumber\\
&= [m_{\min},\, m_{\max}), \label{eq:boundedness}
\end{align}
which proves \textit{(a)}.

\medskip
\noindent If $n_a\ge n_b$, then $p_a\ge p_b$, hence $1-p_a\le 1-p_b$. Since $m_{\max}-m_{\min}>0$,
\begin{align}
m_a-m_b
&=(m_{\max}-m_{\min})\big[(1-p_a)-(1-p_b)\big] \nonumber\\
&=(m_{\max}-m_{\min})(p_b-p_a)\le 0. \label{eq:monotonicity}
\end{align}
Thus $m_a\le m_b$, proving \textit{(b)}.

\medskip
\noindent If $p_c\to 1$, then $1-p_c\to 0$, and from \eqref{eq:margin-rule} we obtain $m_c\to m_{\min}$.
and if $p_c\to 0$, then $1-p_c\to 1$, and from \eqref{eq:margin-rule} we obtain $m_c\to m_{\max}$.
which proves \textit{(c)}.

\medskip
\noindent Let $m(p)=\alpha+\beta p$ be affine function. From the boundary conditions
$
m(1)=m_{\min}~\text{and}~m(0)=m_{\max}
$.
we have 
$
\alpha+\beta=m_{\min}, \quad \text{and} \quad \alpha=m_{\max},
$ 
so $\beta=m_{\min}-m_{\max}$. Hence,
\begin{align}
m(p)
&= m_{\max} + (m_{\min}-m_{\max})p \nonumber\\
&= m_{\min} + (m_{\max}-m_{\min})(1-p). \label{eq:unique-affine}
\end{align}
Therefore, the affine function is unique, proving \textit{(d)}.
\end{proof}
\begin{theorem}
\label{thm:margin_bound}
Let $\{s_i\}_{i=1}^C \subset \mathbb{R}^d$ be simplex-ETF class centers on a hypersphere of radius $R$. Under feature collapse, represent each class cluster by a closed ball
\[
B(s_i, m_{\max}) := \{ x \in \mathbb{R}^d : \|x - s_i\| \le m_{\max} \}.
\]
These closed balls are pairwise disjoint if
$
m_{\max} < \frac{R}{\sqrt{2}},
$
\end{theorem}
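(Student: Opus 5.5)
The plan is to turn disjointness of the balls into a comparison between $2m_{\max}$ and the common distance between simplex vertices, and then show that $m_{\max}<R/\sqrt{2}$ is enough for that comparison uniformly in $C$.

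\textbf{Step 1: distance between centers.} I assume $C\ge 2$. The defining properties of a simplex ETF of radius $R$ are
\[
\|s_i\|=R \quad\text{and}\quad \langle s_i,s_j\rangle=-\tfrac{R^2}{C-1} \quad (i\ne j).
\]
The inner-product value comes from $\sum_i s_i=0$, since then $\|\sum_i s_i\|^2=CR^2+C(C-1)\langle s_i,s_j\rangle=0$. Expanding the square gives, for all $i\neq j$,
\[
\|s_i-s_j\|^2 = 2R^2+\tfrac{2R^2}{C-1}=\tfrac{2R^2C}{C-1}.
\]
So every pair of centers is at the same distance $D:=R\sqrt{2C/(C-1)}$.

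\textbf{Step 2: disjointness criterion.} Suppose some $x$ lies in $B(s_i,m_{\max})\cap B(s_j,m_{\max})$ with $i\ne j$. The triangle inequality gives
\[
D=\|s_i-s_j\|\le \|s_i-x\|+\|x-s_j\|\le 2m_{\max}.
\]
Contrapositively, the two closed balls are disjoint whenever $2m_{\max}<D$, that is, whenever
\[
m_{\max}<\tfrac{R}{2}\sqrt{\tfrac{2C}{C-1}}=R\sqrt{\tfrac{C}{2(C-1)}}.
\]
This bound is in fact sharp: the midpoint of $s_i$ and $s_j$ lies in both balls as soon as $m_{\max}\ge D/2$.

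\textbf{Step 3: reduce to the stated bound.} Since $C/(C-1)>1$ for every finite $C\ge 2$, we have $R\sqrt{C/(2(C-1))}>R/\sqrt2$. Hence the hypothesis $m_{\max}<R/\sqrt2$ implies $2m_{\max}<D$, and Step 2 gives pairwise disjointness.

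I would also record the asymptotic remark promised in the abstract. The exact threshold $R\sqrt{C/(2(C-1))}$ decreases monotonically to $R/\sqrt2$ as $C\to\infty$. So $R/\sqrt2$ is the largest margin bound that works uniformly in the number of classes.

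\textbf{Main obstacle.} The only point needing care is Step 1: fixing the ETF normalization, namely centered vertices on the sphere of radius $R$, so that the inner product is exactly $-R^2/(C-1)$. Everything after that is the triangle inequality. If the paper's ETF were not centered, I would instead use only the weaker fact $\langle s_i,s_j\rangle\le 0$. This still gives $\|s_i-s_j\|\ge R\sqrt2$, and the same argument goes through with the stated bound $m_{\max}<R/\sqrt2$.
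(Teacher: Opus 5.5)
Your proof is correct and takes the same route as the paper. Like the paper, you use the triangle inequality to reduce disjointness to $2m_{\max}<\|s_i-s_j\|$ and then apply the simplex-ETF distance $R\sqrt{2C/(C-1)}$, which you derive from $\sum_i s_i=0$ where the paper cites it. Your Step 3 is actually more rigorous than the paper's ending, which only takes $C\to\infty$ and speaks of ``sufficiently large $C$''; your observation that $C/(C-1)>1$ gives $R\sqrt{C/(2(C-1))}>R/\sqrt{2}$ for every finite $C\ge 2$ is exactly what makes $m_{\max}<R/\sqrt{2}$ a valid sufficient condition for all $C$.
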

\begin{proof}
Let $s_i$ and $s_j$ with $i\neq j$ be any two distinct class centers. Suppose there exists a point
$x \in B(s_i,m_{\max}) \cap B(s_j,m_{\max})$. Then by triangular inequality:
\[
\|s_i - s_j\| \le \|s_i - x\| + \|x - s_j\|.
\]
Since $x \in B(s_i,m_{\max})$ and $x \in B(s_j,m_{\max})$, we have
$\|s_i - x\|\le m_{\max}$ and $\|x - s_j\|\le m_{\max}$, hence:
\[
\|s_i - s_j\| \le m_{\max} + m_{\max} = 2\hspace{0.1cm} m_{\max}.
\]
Therefore, a sufficient condition for the intersection to be empty is:
\[
2\hspace{0.1cm} m_{\max} < \|s_i - s_j\|.
\]
Using the Euclidean distance between any two class centers \cite{papyan2020prevalence},
\[
\|s_i - s_j\| = d = R\sqrt{\frac{2C}{C-1}},
\]
we obtain,
\[
m_{\max} < \frac{d}{2}
= \frac{R}{2}\sqrt{\frac{2C}{C-1}}
= R\sqrt{\frac{C}{2(C-1)}}.
\]
Since the argument holds for every pair $i\neq j$, the family of balls is pairwise disjoint
under this condition.\\
Now, take the limit as $C\to\infty$ gives
\[
R\sqrt{\frac{C}{2(C-1)}} \longrightarrow \frac{R}{\sqrt{2}},
\]
Thus, for sufficiently large C, the disjointedness condition is
$m_{\max} < \frac{R}{\sqrt{2}}$.
\end{proof}
\section{Experiments, Results and Discussions}
\label{sec:exp_res_disc}

\subsection{Dataset}
We evaluate our proposed method on MedMNIST v2 benchmark \cite{medmnistv2}, and the BreaKHis histopathology dataset \cite{spanhol2015dataset}. MedMNIST images are resized to $28{\times}28$, and the official training/test splits are utilized, whereas BreaKHis$(40\times)$ and ASC images are resized to $224{\times}224$ to preserve fine-grained visual characteristics. To facilitate open-set recognition (OSR) evaluation, each dataset is split into "known" and "unknown" classes for each trial. Known classes are used for training and closed-set validation, while unknown classes are held out until inference to assess the model's rejection capabilities.
Additionally, we use the "300k Random Images" dataset~\cite{hendrycks2019oe} as an auxiliary background data to improve unknown class discrimination, which is a cleaned subset of the 80 Million Tiny Images dataset~\cite{torralba200880}.
%
This "300k Random Images" data is used as the auxiliary background data with all the 5 datasets in the evaluation process.
\subsubsection{{BloodMNIST} \cite{medmnistv2}} It consists of $17{,}092$ color microscopic images of individual normal blood cells categorized into $8$ classes  \cite{acevedo2020dataset}. The images are obtained from individuals without infection or hematologic disorders.

\subsubsection{{OCTMNIST} \cite{medmnistv2}} It is derived from a retinal optical coherence tomography (OCT) dataset  \cite{kermany2018deep} and contains $109{,}309$ grayscale images representing $4$ diagnostic classes. To simulate real-world clinical scenarios, the "healthy" class is consistently included in the known set.

\subsubsection{{DermaMNIST} \cite{medmnistv2}} It is sourced from the HAM10000 dataset \cite{codella2019skin, tschandl2018ham10000}. It is a diverse, multi-source collection of $10{,}015$ dermatoscopic images representing $7$ common skin diseases.

\subsubsection{{BreaKHis$(40\times)$} \cite{spanhol2015dataset}} (Breast Cancer Histopathological Image Classification) contains 7{,}909 breast tumor images from 82 patients, with both benign and malignant samples across multiple magnifications. In this work, we use only the $40\times$ subset (1{,}995 images: 625 benign, and 1{,}370 malignant) and split it into $80{:}20$ training and testing sets. 

\subsubsection{Augmented Skin Conditions (ASC)~\cite{naqvi2023augmented}}
In addition to the above evaluation datasets, we use the ASC  dataset for a controlled imbalance study. The ASC dataset contains $2{,}394$ images ($399$ per class) and focuses on enhanced images of $6$ common conditions. The dataset is divided into training and testing sets using an $80{:}20$ split. 

\begin{figure*}[htbp]
    \centering
    \includegraphics[width=0.8\linewidth]{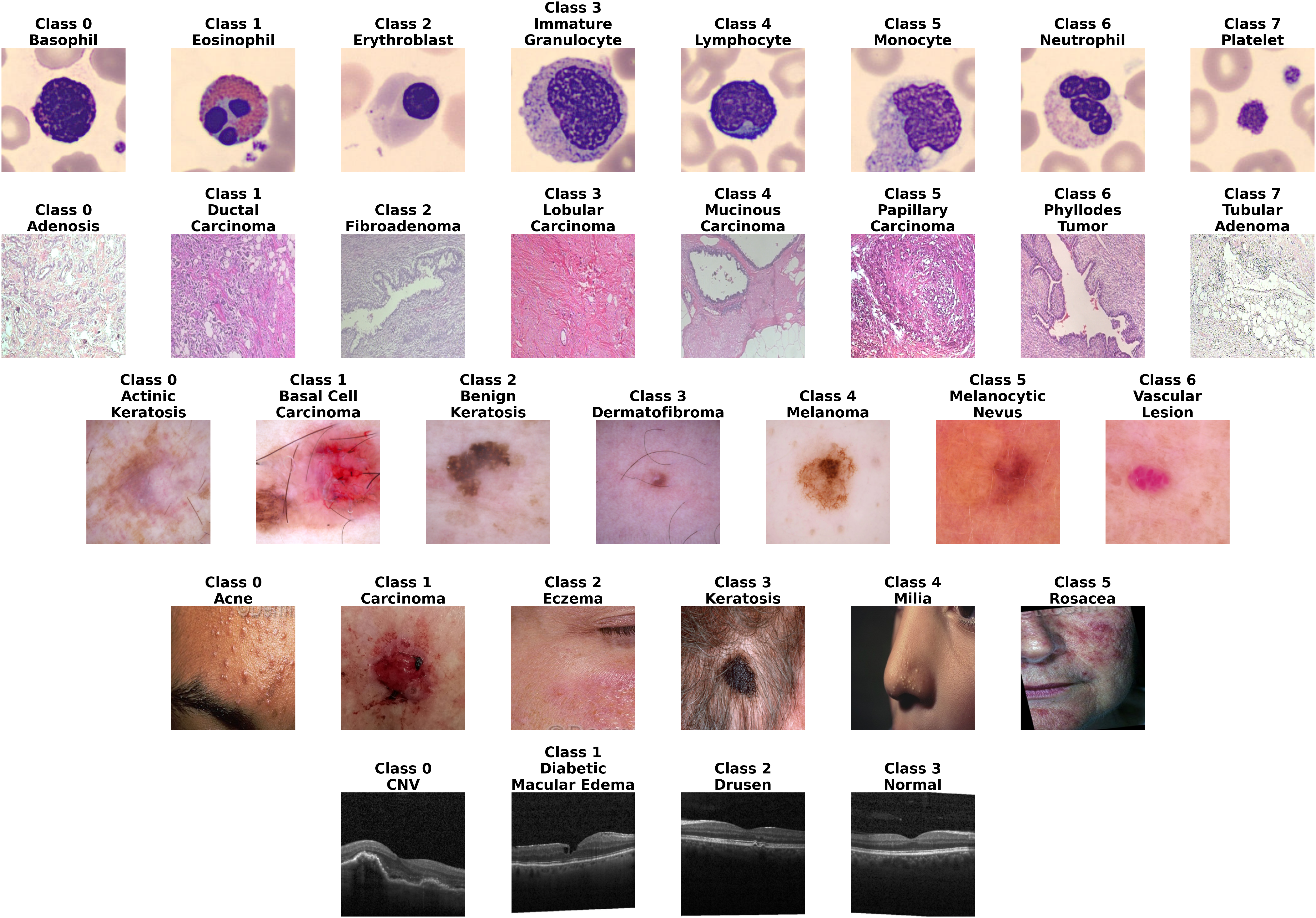}
    \caption{Sample images from the five datasets (a) BloodMNIST (b) BreaKHis$(40\times)$ (c) DermaMNIST (d) Augmented Skin Conditions (ASC) (e) OCTMNIST}
    \label{fig:Sample_images}
\end{figure*}

\begin{figure*}[htbp]
    \centering
    \includegraphics[width=1.0\linewidth]{ROC_All_Datasets_B9_One_Row_seed1_LargeFont_1.png}
    \caption{Receiver Operating Characteristic (ROC) curves for all five datasets from a single run.}
    \label{fig:roc_curves}
\end{figure*}


\subsection{Evaluation Metrics} 
 We use accuracy (ACC) to evaluate closed-set classification performance. To measure open-set performance, we report AUROC, which is a threshold-independent metric that reflects how well the model distinguishes known and unknown samples. We also use OSCR, which jointly considers both closed-set accuracy and open-set recognition ability. A higher OSCR value indicates better overall performance in open-set settings.

\subsection{Implementation Details}
We use ResNet18 \cite{he2016deep} as the backbone classification network for BloodMNIST, DermaMNIST, BreaKHis$(40\times)$, and ASC, and ResNet34 \cite{he2016deep} for OCTMNIST. The models are trained using RMSprop with an initial learning rate $10^{-4}$. 
The batch size is set to 64 for BloodMNIST and DermaMNIST, 128 for OCTMNIST, 8 for BreaKHis$(40\times)$, and 16 for ASC. 
The models are trained for 200 epochs on BloodMNIST and DermaMNIST, and 100 epochs on OCTMNIST, BreaKHis$(40\times)$, and ASC. 
During training, images are augmented using random cropping, random horizontal flipping, and normalization to improve generalization. We run our experiments on NVidia L4 Tensor core GPU with 24GB of memory. For each dataset, a single experimental run takes approximately $2$ to $2.5$ hours.

Except for the ASC dataset, all other datasets are naturally imbalanced, so we also demonstrate the robustness of the proposed \textbf{DMDSC} method on the \textbf{ASC} dataset by introducing artificial class imbalance. The imbalance is measured using the \textbf{Imbalance Ratio (IR)}, which is defined as the ratio of the number of samples in the majority class to the number of samples in the minority class\cite{japkowicz2002class, wang2022geometric}. Experiments are conducted at $\text{IR}=10$ and $\text{IR}=100$, and performance is compared using ACC, AUROC, and OSCR.
\subsection{Results}
\subsubsection{Comparison with SOTA Methods}
Table \ref{tab:comparison with SOTA} presents a comprehensive comparison between DMDSC and existing state-of-the-art (SOTA) open-set recognition methods across BloodMNIST, OCTMNIST, DermaMNIST, BreaKHis$(40\times)$, and ASC using ACC, AUROC, and OSCR averaged over five random runs, where DMDSC achieves the best or competitive open-set recognition performance across the evaluated datasets. On BloodMNIST, DMDSC achieves the highest AUROC (91.12\%) and OSCR (90.49\%), outperforming strong baselines such as DSC, UCDSC, OMCL, and ARPL+CS; on OCTMNIST, it reports the best AUROC (81.90\%) and OSCR (79.97\%), showing clear gains in unknown-class discrimination. For DermaMNIST, although DIAS attains the highest OSCR (74.56\%), DMDSC achieves competitive performance with an OSCR of 73.64\% and the highest AUROC (81.44\%) among all methods, demonstrating a strong balance between closed-set accuracy and unknown rejection. Furthermore, on the fine-grained BreaKHis$(40\times)$ dataset, DMDSC achieves the highest AUROC (71.90\%) and OSCR (68.99\%), while maintaining competitive closed-set accuracy (93.44\%), highlighting its robustness under challenging histopathological variations. 
On ASC, although DIAS attains the highest OSCR (72.73\%), DMDSC achieves the competitive performance with ACC(93.67\% ) and AUROC (74.11\%) among all methods.
As OMCL code is not publicly available,
we report OMCL only on the datasets for which results are published.
%
We have also shown ROC curves for DMDSC for all datasets in Figure~\ref{fig:roc_curves}.
\begin{table*}[htbp]
\centering
\caption{Comparison with state-of-the-art methods across five datasets based on ACC, AUROC, and OSCR metrics.
$t$ = number of open-set trials (known and unknown classes are chosen randomly in each trial). Mean results over five runs are reported.
}
\label{tab:comparison with SOTA}

\setlength{\tabcolsep}{2pt}
\renewcommand{\arraystretch}{1.5}
\footnotesize
\begin{tabular}{|l|c|c|c|c|c|c|c|c|c|c|c|c|c|c|c|}
\hline
\multirow{2}{*}{Methods}
& \multicolumn{3}{c|}{\text{BloodMNIST, $t=5$}}
& \multicolumn{3}{c|}{\text{OCTMNIST, $t=3$}}
& \multicolumn{3}{c|}{\text{DermaMNIST, $t=4$}}
& \multicolumn{3}{c|}{\text{BreaKHis$(40\times)$, $t=5$}}
& \multicolumn{3}{c|}{ASC, $t=4$} \\
\cline{2-16}
& ACC & AUROC & OSCR
& ACC & AUROC & OSCR
& ACC & AUROC & OSCR
& ACC & AUROC & OSCR
& ACC & AUROC & OSCR \\
\hline
GCPL \cite{yang2018robust}     & 98.10 & 84.50 & 85.00 & 94.80 & 65.50 & 64.20 & 81.78 & 70.37 & 62.53 & 83.95 & 66.07 & 60.35 & 52.24 & 54.88 & 34.60 \\
\hline
RPL \cite{chen2020learning}       & 98.00 & 86.80 & 86.30 & 93.70 & 65.90 & 64.20 & 80.78 & 69.93 & 61.76 & 84.04 & 64.18 & 59.11 & 57.86 & 55.61 & 37.55 \\
\hline
ARPL+CS \cite{chen2021adversarial}  & \underline{\textbf{98.50}} & 87.60 & 87.10 & 95.90 & 77.70 & 75.80 & \textbf{86.60} & 73.28 & 67.15 & 85.10 & 65.65 & 59.43 & 60.39 & 58.16 & 39.27 \\
\hline
DIAS \cite{moon2022difficulty}     & \textbf{98.40} & 86.30 & 85.70 & 96.00 & 74.10 & 72.50 & 84.93 & 69.70 & \underline{\textbf{74.56}} & 78.46 & 56.42 & 62.05 & 74.39 & 65.65 & \underline{\textbf{72.73}} \\
\hline
OMCL \cite{liu2023learning}     & 98.30 & 88.60 & 88.00 & \underline{\textbf{96.80}} & \textbf{78.90} & \textbf{77.80} & -- & -- & -- & -- & -- & -- & -- & -- & -- \\
\hline
DSC \cite{cevikalp2024reaching}       & 97.39 & 89.06 & 88.00 & 91.93 & 78.25 & 74.06 & 85.63 & 79.07 & 71.06 & \underline{\textbf{93.68}} & 69.32 & 66.95 & 86.31 & 70.87 & 63.55 \\
\hline
UCDSC \cite{Aditya_2026_WACV}         & 97.69 & \textbf{89.93} & \textbf{88.84} & \textbf{96.70} & 78.11 & 76.56 & 83.23 & \textbf{81.33} & 71.28 & 91.65 & \textbf{71.03} & \textbf{67.20} & \textbf{89.12} & \underline{\textbf{74.80}} & 69.51 \\
\hline
DMDSC (Ours)        & \textbf{98.40} & \underline{\textbf{91.12}} & \underline{\textbf{90.49}} & 96.29 & \underline{\textbf{81.90}} & \underline{\textbf{79.97}} & \underline{\textbf{86.82}} & \underline{\textbf{81.44}} & \textbf{73.64} & \textbf{93.44} & \underline{\textbf{71.90}} & \underline{\textbf{68.99}} & \underline{\textbf{93.67}} & \textbf{74.11} & \textbf{71.38}   \\
\hline
\end{tabular}

\end{table*}
\subsubsection{Effectiveness of Losses}
We study the effect of varying $\lambda_{\text{inter}}$ and $\lambda_{\text{bg}}$ on ACC, AUROC, and OSCR across BloodMNIST, OCTMNIST, DermaMNIST, BreaKHis$(40\times)$ and ASC 
(Tables~ \ref{tab:bloodmnist_lambda} -- \ref{tab:octmnist_lambda}). 

On BloodMNIST 
(Table \ref{tab:bloodmnist_lambda}), 
adding $\mathcal{L}_{inter}$ ($\lambda_{\text{inter}}\neq 0,\, \lambda_{\text{bg}}=0$) improves AUROC/OSCR from 87.23\%/86.75\% to 89.27\%/88.87\% (+2.04/+2.12\%), and further introducing $\mathcal{L}_{bg}$ gives the best AUROC/OSCR of 91.12\%/90.49\% (+3.89\%/+3.74\%). 
On DermaMNIST 
(Table \ref{tab:dermamnist_lambda}), 
$\mathcal{L}_{inter}$ gives the major gain: AUROC/OSCR increases from 70.80\%/64.82\% to 81.40\%/73.46\% (+10.60\%/+8.64\%), with a smaller additional improvement when $\mathcal{L}_{bg}$ is included, reaching 81.44\%/73.64\%. 
For BreaKHis$(40\times)$ 
(Table \ref{tab:breakhis_40_lambda}), 
using both losses improves AUROC/OSCR from 67.03\%/64.54\% to 71.90\%/68.99\% (+4.87\%/+4.45\%). 
On OCTMNIST 
(Table \ref{tab:octmnist_lambda}) 
adding $\mathcal{L}_{inter}$ ($\lambda_{\text{inter}}\neq 0,\, \lambda_{\text{bg}}=0$) improves AUROC/OSCR from 81.81\%/79.77\% to 81.90\%/79.97\% (+0.09/+0.20\%), and further introducing $\mathcal{L}_{bg}$ gives the best AUROC/OSCR of 83.60\%/80.21\% (+2.79\%/+0.44\%). 
%
As OCTMNIST dataset images have subtle morphological variations which do not seem to be well captured through the addition of margin based regularizers, further inclusion of shape/deformation based priors might be useful to address this issue.    
On ASC
(Table \ref{tab:asc_lambda}) 
adding $\mathcal{L}_{inter}$ ($\lambda_{\text{inter}}\neq 0,\, \lambda_{\text{bg}}=0$) improves AUROC/OSCR from 67.73\%/64.48\% to 72.73\%/69.56\% (+5.00/+5.08\%), and further introducing $\mathcal{L}_{bg}$ gives the best AUROC/OSCR of 74.11\%/71.38\% (+6.38\%/+6.90\%).
\begin{table*}[htbp]
  \centering
  \setlength{\tabcolsep}{1.5pt}    
  \renewcommand{\arraystretch}{1.1} 
   \caption{Results on Datasets with varying $\lambda_{inter}$ and $\lambda_{bg}$.}
  \begin{subtable}[t]{0.45\textwidth}
    \centering
    \caption{BloodMNIST}
    \label{tab:bloodmnist_lambda}
    \footnotesize
    \resizebox{\linewidth}{!}{%
      \begin{tabular}{|c|c|c|c|c|c|c|c|c|}
        \hline
        $\lambda_{inter}$ & 0 & 0.01 & 0.1 & 1 & 10 & \textbf{0.1} & 0.1 & 0.1 \\
        \hline
        $\lambda_{bg}$ & 0 & 0 & 0 & 0 & 0 & \textbf{0.1} & 1 & 10 \\
        \hline
        ACC & 98.64 & 98.91 & 98.55 & 97.41 & 97.52 & \textbf{98.40} & 97.88 & 96.99 \\
        AUROC & 87.23 & 88.79 & 89.27 & 89.55 & 88.61 & \textbf{91.12} & 90.53 & 89.18 \\
        OSCR & 86.75 & 88.45 & 88.87 & 88.42 & 87.43 & \textbf{90.49} & 89.63 & 87.57 \\
        \hline
      \end{tabular}%
    }
  \end{subtable}\hfill
  \begin{subtable}[t]{0.45\textwidth}
    \centering
    \caption{ASC}
    \label{tab:asc_lambda}
    \footnotesize
    \resizebox{\linewidth}{!}{%
      \begin{tabular}{|c|c|c|c|c|c|c|c|c|}
        \hline
        $\lambda_{inter}$ & 0 & 0.1 & 1 & 10 & 20 & 10 & \textbf{10} & 10 \\
        \hline
        $\lambda_{bg}$ & 0 & 0 & 0 & 0 & 0 & 0.1 & \textbf{1} & 10 \\
        \hline
        ACC & 92.33 & 89.05 & 93.11 & 92.92 & 92.23 & 93.00 & \textbf{93.67} & 93.11 \\
        AUROC & 67.73 & 70.56 & 71.87 & 72.73 & 72.43 & 72.95 & \textbf{74.11} & 73.98 \\
        OSCR & 64.48 & 66.14 & 68.82 & 69.56 & 68.93 & 69.89 & \textbf{71.38} & 70.76 \\
        \hline
      \end{tabular}%
    }
  \end{subtable}

  \vspace{6pt}

  \begin{subtable}[t]{0.45\textwidth}
    \centering
    \caption{DermaMNIST}
    \label{tab:dermamnist_lambda}
    \footnotesize
    \resizebox{\linewidth}{!}{%
      \begin{tabular}{|c|c|c|c|c|c|c|c|c|}
        \hline
        $\lambda_{inter}$ & 0 & 0.01 & 0.1 & 1 & 10 & \textbf{0.1} & 0.1 & 0.1   \\
        \hline
        $\lambda_{bg}$ & 0 & 0 & 0 & 0 & 0 & \textbf{0.001} & 0.01 & 0.1   \\
        \hline
        ACC & 87.28 & 85.69 & 86.78 & 86.04 & 84.69 & \textbf{86.82} & 86.67 & 86.40  \\
        AUROC & 70.80 & 79.75 & 81.40 & 80.33 & 79.57 & \textbf{81.44} & 81.43 & 81.54  \\
        OSCR & 64.82 & 71.76 & 73.46 & 72.0 & 70.27 & \textbf{73.64} & 73.48 & 73.52  \\
        \hline
      \end{tabular}%
    }
  \end{subtable}\hfill
  \begin{subtable}[t]{0.42\textwidth}
    \centering
    \caption{BreaKHis$(40\times)$}
    \label{tab:breakhis_40_lambda}
    \footnotesize
    \resizebox{\linewidth}{!}{%
      \begin{tabular}{|c|c|c|c|c|c|c|c|}
        \hline
        $\lambda_{inter}$ & 0 & 0.1 & 1 & 10 & 1 & \textbf{1} & 1  \\
        \hline
        $\lambda_{bg}$ & 0 & 0 & 0 & 0 & 0.1 & \textbf{1} & 10  \\
        \hline
        ACC & 93.28 & 89.32 & 90.66 & 90.51 & 92.30 & \textbf{93.44} & 92.32 \\
        AUROC & 67.03 & 70.39 & 70.50 & 68.80 & 71.30 & \textbf{71.90} & 72.48 \\
        OSCR & 64.54 & 65.74 & 66.01 & 64.25 & 67.70 & \textbf{68.99} & 68.86 \\
        \hline
      \end{tabular}%
    }
  \end{subtable}
  \vspace{6pt}

  \begin{subtable}[t]{0.72\textwidth}
    \centering
    \caption{OCTMNIST}
    \label{tab:octmnist_lambda}
    \footnotesize
    \resizebox{\linewidth}{!}{%
      \begin{tabular}{|c|c|c|c|c|c|c|c|c|c|c|c|c|c|}
        \hline
        $\lambda_{inter}$ & 0 & 1 & 0.1 & 0.01 & 0.001 & 0.0001 & \textbf{$10^{-5}$} & $10^{-6}$ & $10^{-5}$ & $10^{-5}$ & $10^{-5}$ & $10^{-5}$ & $10^{-5}$\\
        \hline
        $\lambda_{bg}$ & 0 & 0 & 0 & 0 & 0 & 0 & \textbf{0} & 0 & 0.01 & 0.001 & 0.0001 & 1 & 10 \\
        \hline
        ACC & 96.25 & 93.44 & 93.70 & 94.70 & 96.70 & 96.61 & \textbf{96.29} & 96.28 & 96.51 & 96.40 & 95.79 & 94.65 & 94.36 \\
        AUROC & 81.81 & 80.07 & 79.91 & 82.08 & 80.89 & 80.47 & \textbf{81.90} & 81.39 & 81.59 & 81.09 & 81.55 & 83.60 & 81.61 \\
        OSCR & 79.77 & 76.36 & 76.19 & 78.91 & 79.19 & 78.83 & \textbf{79.97} & 79.44 & 79.78 & 79.34 & 79.55 & 80.21 & 78.22 \\
        \hline
      \end{tabular}%
    }
  \end{subtable}
\end{table*}

\subsubsection{Robustness Evaluation on ASC Dataset}
Table \ref{tab:Imbalance_study} compares the performance of DSC, UCDSC, and the proposed DMDSC under class imbalance ratios (IR) of 10 and 100. Under IR~=~10, DMDSC achieves the best performance across all metrics, improving ACC to 77.25\%, AUROC to 68.33\%, and OSCR to 57.59\%, outperforming DSC by 4.26\%, 2.12\%, and 4.00\%, and UCDSC by 3.53\%, 1.39\%, and 2.91\% respectively. Under the more severe class imbalance setting (IR~=~100), DMDSC consistently outperforms both baselines, achieving 57.89\% ACC, 60.93\% AUROC, and 41.39\% OSCR in the similar way. These results demonstrate that the proposed dynamic-margin formulation effectively improves robustness to class imbalance while providing superior open-set recognition performance.
\begin{table}[t]
\centering
\caption{Comparison of DSC, UCDSC, and the proposed DMDSC on the ASC dataset with varying imbalance ratios (IR). Performance is evaluated using ACC, AUROC, and OSCR metrics.}
\label{tab:Imbalance_study}
\footnotesize
\setlength{\tabcolsep}{3.5pt}
\renewcommand{\arraystretch}{1.15}

\resizebox{\columnwidth}{!}{%
\begin{tabular}{|l|c|c|c|c|c|c|}
\hline
\multirow{2}{*}{\textbf{Methods}}
& \multicolumn{3}{c|}{\textbf{IR = 10}}
& \multicolumn{3}{c|}{\textbf{IR = 100}} \\
\cline{2-7}
& \text{ACC} & \text{AUROC} & \text{OSCR}
& \text{ACC} & \text{AUROC} & \text{OSCR} \\
\hline
DSC \cite{cevikalp2024reaching}
& 72.99 & 66.21 & 53.59
& 52.75 & 59.25 & 36.51 \\
\hline
UCDSC \cite{Aditya_2026_WACV}
& 73.72 & 66.94 & 54.68
& 52.90 & 60.82 & 37.49 \\
\hline
DMDSC (Ours)
& \textbf{77.25} & \textbf{68.33} & \textbf{57.59}
& \textbf{57.89} & \textbf{60.93} & \textbf{41.39} \\
\hline
\end{tabular}%
}
\end{table}
\begin{table}[t]
\centering
\caption{Hyperparameter values used for each dataset. BL: BloodMNIST, OCT: OCTMNIST, DE: DermaMNIST, ASC: Augmented Skin Conditions, BK: BreaKHis ($40\times$).}
\label{tab:optimal_hyperparameters}

\footnotesize
\setlength{\tabcolsep}{3pt}
\renewcommand{\arraystretch}{1.15}

\begin{tabular}{|l|c|c|c|c|c|}
\hline
\textbf{Parameter}
& \textbf{BL}
& \textbf{OCT}
& \textbf{DE}
& \textbf{BK}
& \textbf{ASC}\\
\hline
Batch size & 64 & 128 & 64 & 8 & 16\\
\hline
$m_{\min}$ & 35 & 35 & 35 & 35 & 35 \\
\hline
$m_{\max}$ & 55 & 55 & 55 & 55 & 55 \\
\hline
$\lambda_{\mathrm{inter}}$ & 0.1 & $10^{-5}$ & 0.1 & 1 & 10\\
\hline
$\lambda_{\mathrm{bg}}$ & 0.1 & 0 & 0.001 & 1 & 1 \\
\hline
$R$ & 100 & 100 & 100 & 100 & 100 \\
\hline
\end{tabular}
\end{table}
\section{Conclusions}

In this work, we proposed DMDSC, a Dynamic-Margin Deep Simplex Classifier
for open-set recognition in imbalanced medical image datasets. Unlike existing
Neural-Collapse based approaches that rely on a uniform margin, our method introduces a class-aware dynamic margin that scales inversely with class frequency,
thereby strengthening the representations of rare pathologies. By incorporating
the dynamic margin into simplex ETF feature space, DMDSC simultaneously enforces compact intra-class clustering, while increasing inter-class and background
separation. Experimental results on BloodMNIST, OCTMNIST, DermaMNIST, BreaKHis$(40\times)$ and ASC show that the proposed approach improves AUROC and OSCR
while maintaining strong accuracy, particularly under severe class imbalance.
The findings indicate that incorporating a dynamic margin improves feature separation under class imbalance without increasing model complexity. This work
can be useful in the healthcare domain where AI systems encounter rare diseases,
imaging artifacts, or previously unseen pathologies, enabling safer diagnosis by
accurately classifying known diseases while reliably rejecting unknown cases. As
a part of future work, we also plan to extend our method to non-linear margin
function m(p) to understand its impact on the OSR performance.
We also plan to include shape/deformation based priors to capture subtle morphological patterns in medical image datasets.
{
    \small
    \bibliographystyle{ieeenat_fullname}
    \bibliography{main}
}

\end{document}